\documentclass{article}

\PassOptionsToPackage{numbers, compress}{natbib}

\usepackage[preprint]{neurips_2026}
\usepackage[utf8]{inputenc} 
\usepackage[T1]{fontenc}    
\usepackage{url}            
\usepackage{booktabs}       
\usepackage{amsfonts}       
\usepackage{nicefrac}       
\usepackage{microtype}      
\usepackage[dvipsnames]{xcolor}  

\usepackage{graphicx}
\usepackage{subcaption}
\usepackage{wrapfig}
\usepackage[ruled,vlined]{algorithm2e}

\usepackage{hyperref}
\usepackage{multicol}
\usepackage{multirow}
\usepackage{float}

\usepackage{amsmath}
\usepackage{amssymb}
\usepackage{mathtools}
\usepackage{amsthm}
\usepackage[capitalize,noabbrev]{cleveref}

\theoremstyle{plain}
\newtheorem{theorem}{Theorem}[section]

\theoremstyle{definition}

\theoremstyle{remark}

\title{Mamba with Hierarchical Memory: Solving Representation Bottleneck in Long Sequence Modeling}

\author{%
  Qinwen Wang\thanks{These authors contributed equally.} \\
  \small The Hong Kong University of\\
  \small Science and Technology, Guangzhou\\
  \small \texttt{qwang477@connect.hkust-gz.edu.cn}
  \And
  Jieping Luo\textsuperscript{*} \\
  \small University of Oxford\\
  \small \texttt{jieping.luo@reuben.ox.ac.uk}
  \And
  Aoxiang Qin\textsuperscript{*} \\
  \small The Chinese University\\
  \small of Hong Kong, Shenzhen\\
  \small \texttt{aoxiangqin@link.cuhk.edu.cn}
  \And
  Ruoyu Zhao \\
  \small City University \\
  \small of Hong Kong\\
  \small \texttt{ruoyuzhao8-c@my.cityu.edu.hk}
  \And
  Jianxiong Tang \\
  \small City University \\
  \small of Hong Kong\\
  \small \texttt{jiatang@cityu.edu.hk}
  \And
  Wei Zhang \\
  \small Hainan Bielefeld University\\
  \small of Applied Sciences\\
  \small \texttt{wei.zhang@hainan-biuh.edu.cn}
  \And
  Zhichao Lu \\
  \small City University \\
  \small of Hong Kong\\
  \small \texttt{zhichao.lu@cityu.edu.hk}
  \And
  Luziwei Leng\thanks{Corresponding author.} \\
  \small BrainGalaxy\\
  \small \texttt{lengluziwei@braingalaxy.ai}
}

\begin{document}

\maketitle

\begin{abstract}
Recurrent linear attention models (RLAs) such as Mamba offer efficient linear-time sequence modeling as an alternative to Transformers, yet their fixed‑capacity recurrent states limit long‑sequence modeling. Drawing inspiration from hierarchical human memory, we propose Hierarchical Memory Mamba (HMM) to address this limitation. Building upon a pre‑trained Mamba backbone, HMM integrates a lightweight working memory that extracts slow paragraph‑level semantics (PLS) from the fast sensory memory embedded in the backbone’s hidden states. The PLS is subsequently compressed into persistent long‑term memory for task‑relevant retrieval. The hierarchical processing of semantic information overcomes the representation bottleneck of RLAs and endows HMM cross‑task generalization through parametric learning, which is not observed in other long‑context enhanced Mamba variants. Evaluations on Passkey Retrieval and LongBench-E tasks demonstrate that HMM improves retrieval success by 34.3-37.1\% and reasoning accuracy by 1.6-14.2\% over strong Mamba‑based models, while adding only 2\% extra parameters and with minimal training overhead.
\end{abstract}

\section{Introduction}
Transformer-based architectures have been the dominant choice of Large Language Models (LLMs) due to their strong expressive power \cite{vaswani2017attention, katharopoulos2020transformers}. However, their quadratic computational complexity $O(L^2)$ with respect to sequence length $L$ poses severe scalability challenges for long-context modeling. Recurrent linear attention models (RLAs), including RWKV \cite{peng2023rwkv}, linear transformers with the delta rule \cite{schlag2021linear, yang2024parallelizing}, and modern State-Space Models (SSMs) \cite{gu2022efficiently, zhong2026dyn} such as Mamba \cite{gu2024mamba}, address this limitation by performing recurrent updates that scale linearly with sequence length, offering a promising alternative for processing virtually unlimited sequences. Despite their theoretical potential for infinite context-length, empirical studies reveal that RLAs suffer from performance degradation when inference length exceeds the training horizon \cite{orvieto2023resurrecting}. Recent efforts \cite{ye2025longmamba,ben-kish2025decimamba,azizi2025mambaextend} focus on mitigating cumulative decay within the recurrence to enlarge the effective receptive field. However, recent discoveries indicate that finite-capacity recurrent states inevitably suffer from representational collapse as the information allocated per token vanishes~\cite{wen2024rnnstransformersyetkey,chen2024stuffed,wang2025understandingmitigatingbottlenecksstate}. 

In this work, we first reveal empirically that reducing numerical decay can substantially improve perplexity (PPL), yet this improvement does not reliably translate to gains in long-context reasoning (Fig.~\ref{fig:right_aligned}). We further prove theoretically that this limitation is irreconcilable, stemming from a non-injective mapping that projects an exponentially expanding semantic history space onto a bounded state manifold, causing distinct historical trajectories to become indistinguishable—a phenomenon we refer to as \textit{semantic aliasing}. We argue that mitigating recurrent decay alone is insufficient, as fixed-dimensional states inherently act as lossy compressors of expanding historical contexts. Long-context failures of RLAs arise not only from cumulative decay, but also from the absence of explicit mechanisms to organize, preserve, and selectively reuse high-level semantic information. 


\begin{figure}[t]
    \centering
    \captionsetup{font=normalsize,labelfont=normalfont,textfont=normalfont}
    \includegraphics[width=\columnwidth]{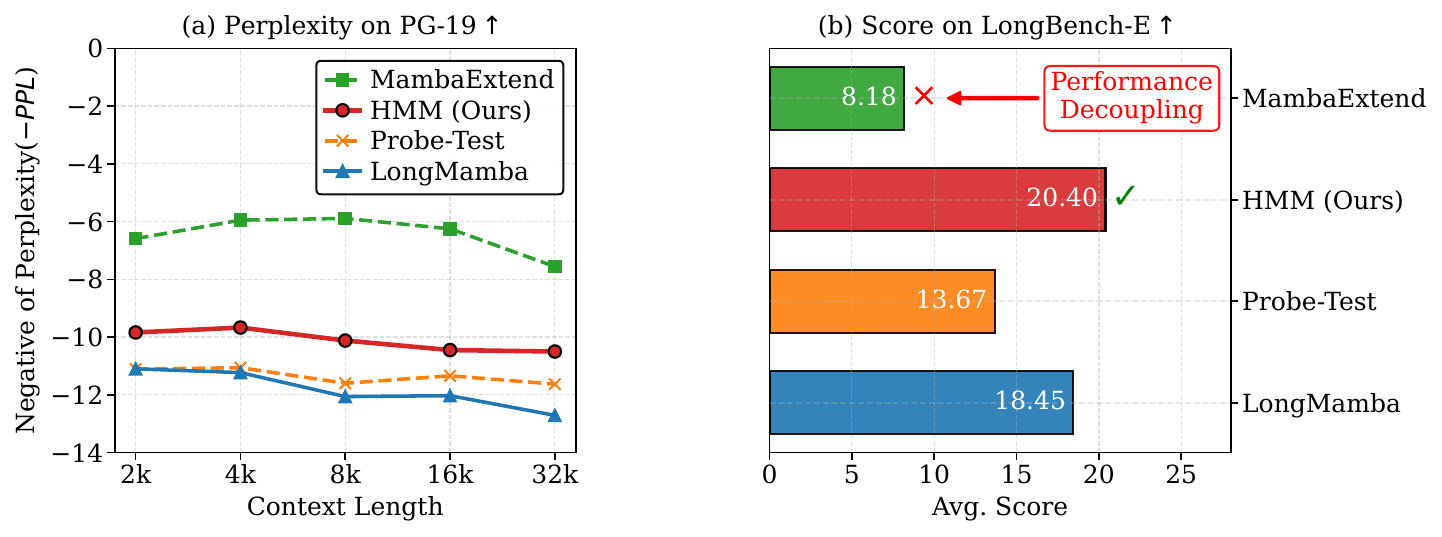}
    \vspace{-5pt}
    \caption{Good PPL on PG-19 in (a) does not translate to high reasoning accuracy on LongBench-E (b). HMM resolves this mismatch via hierarchical memory, mitigating semantic aliasing. Note that the y-axis in (a) denotes the negative of PPL for visual consistency with (b), higher is better.}
    \label{fig:right_aligned}
\end{figure}

Inspired by the multi-stage model of human memory \cite{sweller2003evolution,tulving2000oxford}, wherein information progresses from sensory to working memory and is consolidated into long-term storage for later retrieval, we propose Hierarchical Memory Mamba (HMM) (Fig.~\ref{fig:passkey_retrieval1}).
Building upon a pre‑trained Mamba backbone, HMM integrates a lightweight working memory network that extracts slow-evolving segments of paragraph‑level semantics (PLS) from the fast sensory memory embedded in the backbone’s hidden states. The PLS is subsequently encoded into highly-compressed array as persistent long‑term memory (LTM) for task‑relevant retrieval. Through hierarchical extraction, storage and retrieval of semantic information, HMM circumvents the fixed recurrent state limit of RLAs. Furthermore, the combination of parametric learning of the working memory and the non-parametric storage of LTM endows HMM cross‑task generalization, which is not observed in other long‑context enhanced Mamba variants. Evaluations on Passkey Retrieval and LongBench-E tasks demonstrate that HMM improves retrieval success by 34.3-37.1\% and reasoning accuracy by 1.6-14.2\% over strong Mamba‑based models, while adding only 2\% extra parameters with minimal training overhead.

\section{Related Work}

\textbf{Long Context Language Models.} Language models trained with limited context often degrade when evaluated on much longer inputs. While Transformer-based models extend context through specialized positional \cite{peng2023yarnefficientcontextwindow, ding2024longropeextendingllmcontext} or attention mechanisms \cite{zaheer2021bigbirdtransformerslonger, dao2022flashattentionfastmemoryefficientexact}, such techniques are not directly applicable to RLAs due to architectural difference. Recent work has started to close this gap. DeciMamba \cite{ben-kish2025decimamba} studies length extrapolation in Mamba and proposes training-free token filtering to reduce the effective sequence length in deeper layers. MambaExtend \cite{azizi2025mambaextend} instead performs training-free calibration of discretization-related scaling factors to improve robustness under length shift. LongMamba \cite{ye2025longmamba} analyzes channel-wise effective receptive fields and enlarges long-range capability via training-free token filtering targeted at global channels. However, these approaches largely address numerical decay under recurrent dynamics with fixed-capacity. Recent analyses suggest that the fundamental bottleneck of SSMs is representational rather than purely numerical: fixed-capacity states would lead to recency bias and representation over-smoothing as sequence length increases \cite{wen2024rnnstransformersyetkey,chen2024stuffed,wang2025understandingmitigatingbottlenecksstate}.

\textbf{Memory-Augmented Architectures.} 
Memory-augmented architectures improve long-context modeling through external space or structured memory mechanisms. Non-parametric methods like Retrieval-Augmented Generation (RAG)~\citep{lewis2020retrieval} and RETRO~\citep{borgeaud2022improving} utilize external databases to maintain expansive, up-to-date knowledge. Transformer-XL~\cite{dai2019transformerxlattentivelanguagemodels} and HMT~\cite{he2025hmthierarchicalmemorytransformer} introduce segment-level recurrent memory for Transformer architectures, however suffer from attention complexity under ultra-long contexts. 
RLAs such as Gated DeltaNet~\cite{yang2025gateddeltanetworksimproving} and MoM~\cite{du2025momlinearsequencemodeling} improves recurrent memory dynamics through gated forgetting or sparse routing. Nevertheless they still rely on continuously rewritten recurrent memories, which may lead to representation collapse under ultra-long contexts while introducing additional memory and parameter overhead. HMM addresses these limitations by using an RLA backbone while introducing a hierarchical memory that overcomes the representation bottleneck of fixed recurrent states, with minimal storage overhead.


\begin{figure*}[t] 
    \centering
    \includegraphics[width=\textwidth]{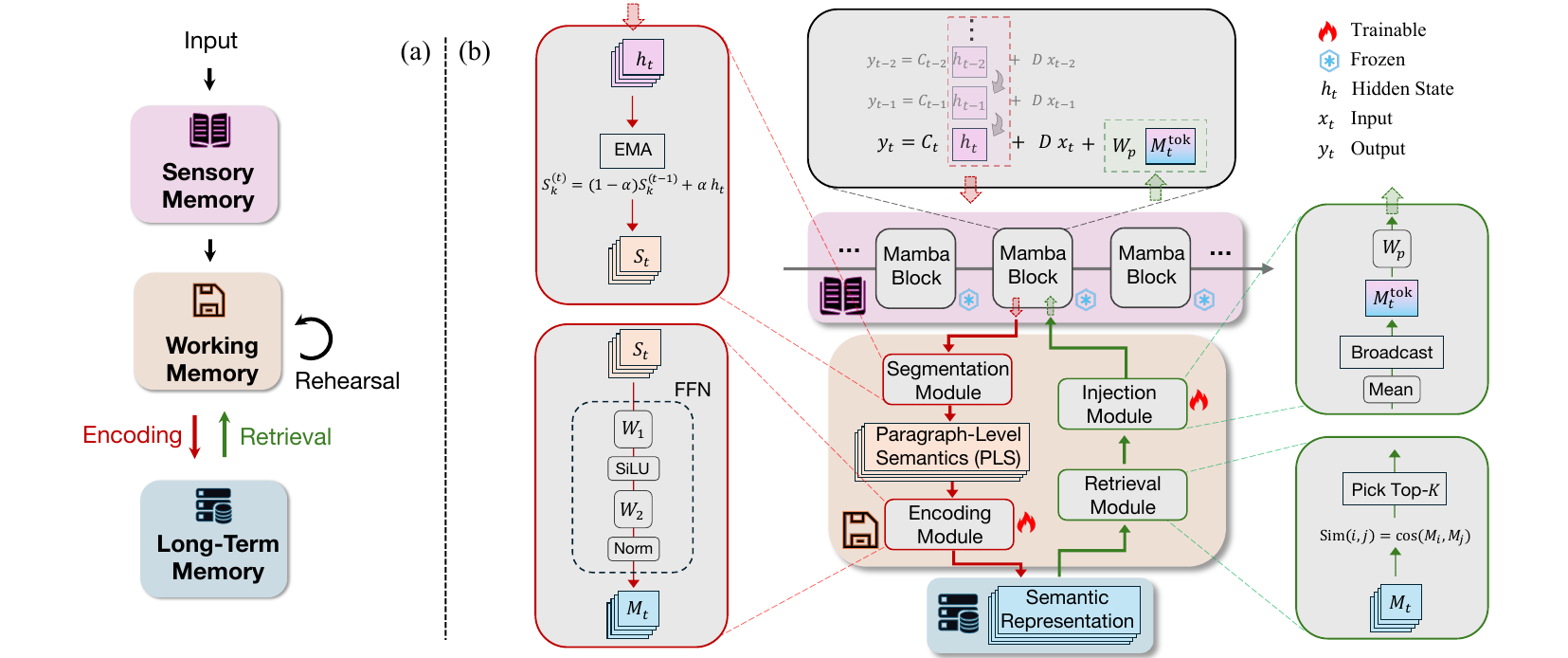}    
    \caption{Overview of HMM. (a) Cognitive Inspiration: Multi-stage human memory involving sensory, working, and long-term memory. (b) Technical Pipeline: Through a trainable working memory network, HMM firstly maps pre-trained Mamba hidden states ($h_t$) (resembling sensory memory) to high-level semantic representation as long-term memory via Semantic Encoding (red arrow, Sec.~\ref{sec:Semantic_encoding}), then retrieves and broadcasts them back to late layers of the pretrained backbone via Semantic Retrieval (green arrow, Sec.~\ref{sec:semantic_retrieval}) to enhance the capability for long-context reasoning.}
    \label{fig:passkey_retrieval1}
    \vspace{-0.5cm}
\end{figure*}

\section{Preliminary}
\label{sec:prelim}
We review the the recurrence design of Mamba~\cite{gu2024mamba} and analyze its long-context limitations in the lens of cumulative decay \cite{gu2022parameterization,wang2025understandingmitigatingbottlenecksstate}. 

\textbf{Mamba as a Time-Varying SSM.} Mamba is grounded in the S6 layer, which introduces input-dependent dynamics to the standard SSM. For a single channel, the recurrence is governed by
\begin{align}
    h_t &= \bar{\mathbf{A}}_t h_{t-1} + \bar{\mathbf{B}}_t x_t, \\
    \label{eq:s6_recurrence}
    y_t &= \mathbf{C}_t h_t,
\end{align}
where $x_t \in \mathbb{R}$ is the scalar input at time $t$, and $h_t \in \mathbb{R}^{d_h}$ denotes the latent hidden state with dimension $d_h$. The discretized system matrices $\bar{\mathbf{A}}_t \in \mathbb{R}^{d_h \times d_h}$ and $\bar{\mathbf{B}}_t \in \mathbb{R}^{d_h \times 1}$ are derived from their continuous-time counterparts via a step size $\Delta_t > 0$, which is predicted from the current input $x_t$ to enable selective focus. And $\mathbf{C}_t \in \mathbb{R}^{1 \times d_h}$ denotes the dicrete time-variant output matrices, and $y_t$ is the output.

For the convenience of theoretical analysis, we consider the following general linear time-varying form: 
\begin{equation}
    h_t = \Phi_t h_{t-1} + \Gamma_t x_t,
    \label{eq:ltv_ssm}
\end{equation}
where $\Phi_t \in \mathbb{R}^{d_h \times d_h}$ represents the effective state transition operator and $\Gamma_t \in \mathbb{R}^{d_h \times 1}$ is the input injection operator, both of which are input-dependent. 

\textbf{Cumulative Decay.} To evaluate the long-term retention of information, we unroll the recurrence to reveal the contribution of a past state $h_j$ to the current state $h_t$:
\begin{equation}
    h_t = \Bigl(\prod_{k=j+1}^{t} \Phi_k\Bigl) h_j + \sum_{k=j+1}^{t} \Bigl( \prod_{m=k+1}^{t} \Phi_m \Bigl) \Gamma_k x_k,
    \label{eq:unrolled_ssm}
\end{equation}
While Mamba's selection mechanism allows $\Phi_t \approx \mathbf{I}$ (where $\mathbf{I}$ is the identity matrix) to selectively preserve information, practical structural stability constraints require the spectral radius $\rho(\Phi_t) \le 1 - \epsilon$ on average to prevent numerical explosion~\cite{gu2022parameterization}, with $\epsilon > 0$ being a small stability margin. Consequently, the cumulative product decays exponentially as $\|\prod_{k=j+1}^{t} \Phi_k\| \sim O((1-\epsilon)^{t-j})$. This fading memory property inherently limits the model's ability to retain precise details over ultra-long horizons, as the gradient and signal from distant tokens $x_j$ diminish before they can be effectively utilized for reasoning at time $t$.
\section{Motivation}
\label{motivation}
\subsection{The Illusion of Low Perplexity}
\label{sec:illusion_ppl}
In long-context modeling, it is commonly assumed that numerical stability—typically measured by lower PPL—is sufficient to ensure the effectiveness of historical information \cite{ben-kish2025decimamba, ye2025longmamba}. To examine this assumption, we construct a diagnostic variant Probe that explicitly suppresses cumulative decay by generating a modulation factor for $\Delta t$ from the model’s own hidden states. Table~\ref{tab:ppl_analysis} compares the performance of this diagnostic variant Probe with baseline Mamba2~\citep{dao2024transformersssmsgeneralizedmodels} and LongMamba across different context lengths.
\begin{table}[htbp]
\centering
\caption{Comparison of PPL on Lambada and reasoning accuracy on LongBench (LB).}
\label{tab:ppl_analysis}
\small
\setlength{\tabcolsep}{3pt}  
\renewcommand{\arraystretch}{1.15}
\scalebox{0.9}{
\begin{tabular}{l|ccccc|c|c}
\hline
\textbf{Model} & \textbf{$2k$} & \textbf{$4k$} & \textbf{$8k$} & \textbf{$16k$} & \textbf{$32k$} & \textbf{Avg} & \textbf{LB} \\
\hline
Mamba2     & 13.18 & 265.6 & 3844  & 11196 & 17710 & 6606  & 7.1 \\
LongMamba  & 13.18 & 14.68 & 11.21 & 12.71 & 16.84 & 13.72 & \textbf{13.73} \\
Probe      & \textbf{13.09} & \textbf{14.27} & \textbf{10.34} & \textbf{11.79} & \textbf{15.73} & \textbf{13.00} & 9.57 \\
\hline
\end{tabular}}
\end{table}
The results show that LongMamba substantially stabilizes PPL relative to the baseline by alleviating cumulative decay. The Probe further reduces average PPL, outperforming LongMamba across all evaluated context lengths. However, this improvement in numerical stability does not translate into comparable gains on LongBench: the probe attains only 9.57 accuracy, markedly below LongMamba, revealing a decoupling between statistical fluency and semantic utility.

\subsection{Semantic Aliasing}
\label{sec:theoretical_analysis}

The decoupling between PPL and reasoning utility indicates that mitigating numerical decay alone is insufficient for Mamba in long-context modeling. We attribute this to a structural bottleneck: a fixed-dimensional recurrent state must compress an ever-growing semantic history, leading to \textit{semantic aliasing}. Recall the selective SSM recurrence in Eq.~\eqref{eq:ltv_ssm}. Let $\mathcal{F}_T:\mathcal{X}^T\to\mathcal{H}\subset\mathbb{R}^{d_h}$ denote the induced mapping from the length-$T$ history $x_{1:T}$ to the terminal state $h_T$.
Geometrically, $\mathcal{F}_T$ maps an exponentially expanding history space of all possible token sequences $\mathcal{X}^T$ into a bounded, fixed-dimensional manifold $\mathcal{H}$; hence it becomes non-injective at any finite resolution $\epsilon$ as $T$ grows~\cite{jelassi2024repeatmetransformersbetter,wen2024rnnstransformersyetkey}. 
We formally define \textit{semantic aliasing} as the phenomenon where two distinct histories $x_{1:T}^{(a)} \neq x_{1:T}^{(b)}$ result in states whose distance is below the model's discriminative threshold:
\begin{equation}
\big\|\mathcal{F}_T(x_{1:T}^{(a)})-\mathcal{F}_T(x_{1:T}^{(b)})\big\|_2 \le \epsilon .
\label{eq:semantic_aliasing}
\end{equation}
Appendix~\ref{sec:proof_aliasing} proves that such collisions are unavoidable beyond a context-length threshold (for fixed $d_h$ and $\epsilon$), implying that the model can become unable to resolve conflicting long-range semantics even without numerical decay. This motivates an external hierarchical memory design to circumvent the structural bottleneck, as illustrated in the next section.

\section{Method}
\label{method}
Inspired by hierarchical human memory, HMM adopts a pre-trained Mamba model as sensory memory backbone for token-level representation. A working memory is proposed to extract PLS from hidden states and further encode them into LTM for task-relevant retrieval. The operation of working memory consists of two core procedures: semantic encoding and semantic retrieval.

\subsection{Semantic Encoding}
\label{sec:Semantic_encoding}
Semantic encoding consists of a segmentation process that groups token representations into semantically coherent PLS, and a compression process that downsamples PLS into compact LTM preparing for subsequent retrieval.

\textbf{Segmentation.} Given hidden states ${h_t}$, we maintain a semantic prototype $S_k$ by measuring semantic consistency via cosine similarity. If $\cos(S_k, h_t) \ge \tau$, with $\tau$ denoting a threshold, the token is considered semantic consistent and updates the prototype via exponential moving average (EMA):
\begin{equation}
\label{ema}
S_k \leftarrow \mathrm{Norm}\big( (1-\alpha) S_k + \alpha h_t \big),
\end{equation}
where $\alpha$ is the EMA update rate. If $\cos(S_k, h_t) < \tau$, the token indicates semantic drift, and a counter $c_k$ accumulates consecutive low-similarity signals. A new segment prototype is created once $c_k > p$, controlled by the patience parameter $p$. This yields the PLS, $\{S_k\} \in \mathbb{R}^{K \times D}$, with $K$ and $D$ denoting its total length and state dimension of the sensory-memory backbone, respectively. In this way, PLS summarizes the slow-evolving paragraph semantics, ensuring a bounded context and non-vanishing sensitivity to tokens. A suedo code of the segmentation algorithm is provided in Appendix \ref{ema_segmentation}.

\textbf{Compression.} Each element of PLS is then downsampled via a lightweight feed-forward network into a compressed memory space, defined as LTM:
\begin{equation}
\label{eq:ffn_memory}
M_k = \mathrm{LN}\!\left( \mathbf{W}_2 \, \sigma\!\left( \mathbf{W}_1 S_k \right) \right),
\end{equation}
where $\mathbf{W}_1 \in \mathbb{R}^{S \times D}$ and $\mathbf{W}_2 \in \mathbb{R}^{M \times S}$ are learnable weights, $S$ is the dimensionality of an intermediate projection space,  and $M$ is the dimension of the state space of LTM. $\sigma(\cdot)$ denotes a non-linear activation (SiLU) and $\mathrm{LN}(\cdot)$ denotes Layer Normalization. Note that the historical PLS need not to be kept in memory, and its latest element can be discarded once it is compressed into LTM. By ensuring $M<<D$, the compression module achieves minimal storage overhead of LTM, $\{M_k\} \in \mathbb{R}^{K \times M}$.

\subsection{Semantic Retrieval}
\label{sec:semantic_retrieval}
After encoding, the working memory selects task relevant LTM via similarity ranking, then injects them back to the sensory backbone, adding useful semantic and context information for the task.

\textbf{Selection.}
The correlation between the latest element of LTM M and other historical elements is calculated by cosine similarity:
\begin{equation}
\mathrm{Sim}(i,j)
= \cos(M_i, M_j)
= \frac{{M}_i^\top {M}_j}{\lVert {M}_i \rVert \, \lVert {M}_j \rVert},
\qquad j < i .
\end{equation}
Then, the top-$K$ relevant historical LTM elements are selected via similarity ranking:
\begin{equation}
\label{eq:topk}
\mathcal{R}_{i} = \mathrm{topK}_{j} \big( \mathrm{Sim}_{i,j} \big),
\end{equation}
where $\mathcal{R}_{i}$ denotes the indices of the most semantically similar elements in LTM to $M_i$. The selection process identifies the most relevant historical LTM element indices for the current representation, providing input for subsequent injection.

\textbf{Injection.} 
The selected LTM elements are averaged to form a stable representation of relevant memory for the current task: 
\begin{equation}
\label{eq:mem_agg}
{M}^{\text{sta}}_i
= \frac{1}{|\mathcal{R}_i|} \sum_{r \in \mathcal{R}_i} {M}_r.
\end{equation}
Then, ${M}^{\text{sta}}_i$ is broadcast to all token-level representations within the paragraph summarized by $M_i$, forming a token-wise tensor ${M}_t^{\text{tok}}$, 
which is further upsampled to token-level dimension for injection onto the backbone:
\begin{equation}
\label{injection}
\tilde{\mathbf{y}}_t = \mathbf{y}_t + \mathbf{W}_p\,{M}^{\text{tok}}_t ,
\end{equation}
where $\mathbf{W}_p \in \mathbb{R}^{D \times M}$ is the upsampling weights. In this way, relevant historical memory is retrieved and integrated into the sensory backbone, circumventing the representation bottleneck of fixed recurrent states. 

\subsection{Theoretical Analysis} 
\label{sec:theore semantic}
The hierarchical memory of HMM mitigates semantic aliasing by preventing Jacobian rank collapse in standard recurrent dynamics. In standard SSMs, due to stability constraints, the Jacobian $\frac{\partial y_T}{\partial x_t}$, which measures the sensitivity of the current output to a distant input, involves a product of transition operators whose singular values decay exponentially, leading to a collapse in the sensitivity to distant inputs.

HMM addresses this by decomposing the total Jacobian into a recurrent path and a hierarchical memory path:
\begin{equation}
J_{HMM} = \frac{\partial y_{T}}{\partial x_{t}} + \frac{\partial (W_{p} {M}^{\text{tok}}_T)}{\partial S_{k^*}} \frac{\partial S_{k^*}}{\partial x_{t}} .
\end{equation}
The first term corresponds to the recurrent sensitivity path, which propagates through the product of state transition operators and vanishes as $T-t\to\infty$ under stability constraints. The second term is induced by hierarchical memory path, which injects relevant historical memory directly into the output. Since the PLS $S_{k^*}$-with $k^*$ denoting the indices of PLS elements selected by Top-$K$ retrieval-are constructed via local temporal aggregation (Eq.~\ref{ema}), the sensitivity $\frac{\partial S_{k^*}}{\partial x_t}$ depends only on the paragraph length and is independent of the total sequence length $T$. As a result, the hierarchical memory preserves a non-vanishing sensitivity path even when the recurrent Jacobian collapses. A formal analyse is in Appendix~\ref{sec:proof_jacobian} and Appendix~\ref{app:other_jac}.
\section{Experiments}
In this section, we conduct a comprehensive evaluation of HMM across a suite of complementary benchmarks to assess its long context modeling capabilities. 

\subsection{Experimental Settings}

\textbf{Datasets.} We evaluate HMM on PG-19 \cite{rae2019PG19}, LongBench-E \cite{bai2024longbench} and Passkey Retrieval \cite{azizi2025mambaextend}. 
We measure PPL on PG-19 following the setup in~\cite{ye2025longmamba} to assess fundamental modeling capabilities. LongBench-E is a comprehensive dataset comprising diverse real-world tasks such as long-form question answering and summarization. The Passkey Retrieval task is adopted to probe the effective context length and the model's ability to retrieve specific information from extended sequences.

\textbf{Training Setting.} We adopt pretrained Mamba2~\citep{dao2024transformersssmsgeneralizedmodels} models at different scales (780M and 1.3B) as the backbone for HMM. Unless otherwise specified, the pretrained Mamba backbone is kept frozen and only the parametric working memory is optimized. The additional trainable parameters introduced by \textsc{HMM} account for only $\sim$2\% of the backbone. All adaptation is performed \emph{exclusively on sequences sampled from the Pile dataset}~\citep{gao2020pile}, while evaluation is conducted on other downstream datasets \emph{without additional fine-tuning}. Note that this setting is distinct from other long-context Mamba variants where fine-tuning were done on downstream datasets, exhibiting cross-task generalization of HMM. Although \textsc{HMM} already shows clear improvements with only 300 adaptation samples, we report results adapted on 6{,}000 sequences to present the best achievable performance.
\begin{wraptable}[13]{r}{0.52\textwidth}
    \vspace{-0.8em}
    \centering
    \caption{PPL over different evaluation context lengths on PG-19 dataset. The Lower is better.}
    \label{tab:mamba_ppl}
    \footnotesize
    \setlength{\tabcolsep}{2.1pt}
    \renewcommand{\arraystretch}{0.92}
    \begin{tabular}{lcccc}
        \toprule
        \textbf{Method} & \textbf{$4k$} & \textbf{$8k$} & \textbf{$16k$} & \textbf{$32k$} \\
        \midrule
        \multicolumn{5}{l}{\textit{\textbf{Mamba2-780M}}} \\ 
        Vanilla   & 21.26 & 2320.31 & 7507.38 & 7531.29 \\
        LongMamba & 11.23 & 12.06 & 12.03 & 12.71 \\
        HMM       & \textbf{11.03} & \textbf{11.35} & \textbf{11.11} & \textbf{11.59} \\
        \midrule
        \multicolumn{5}{l}{\textit{\textbf{Mamba2-1.3B}}} \\
        Vanilla   &  14.81 & 129.33 & 1142.41 & 4642.43 \\
        LongMamba &  10.30 & 11.44 & 11.72 & 12.79 \\
        HMM       &  \textbf{10.02} & \textbf{10.47} & \textbf{10.98} & \textbf{12.16} \\
        \bottomrule
    \end{tabular}
    \vspace{-0.8em}
\end{wraptable}
\textbf{Implementation Details.} We employ a two-stage hyperparameter selection strategy. First, segmentation parameters ($\tau, p$) are calibrated on independent long-form samples to ensure generalizable boundary detection. Second, architectural configurations—including semantic encoding layer indices (defaulting to 12) and LTM state dimension (defaulting to 128)—are optimized on a LongBench-E validation split. Univariate ablations on the robustness of these choices are detailed in Sec.~\ref{sec:Ablation Studies of HMM}. Note that the working memory only encodes semantics from a single layer of the backbone and the retrieved memory is projected to all subsequent layers after the encoding layer.

\subsection{Long Context Modeling}
\textbf{Language Modeling.} Following the setup in LongMamba~\cite{ye2025longmamba}, we evaluate PPL on PG-19~\cite{rae2019PG19} using Mamba2-780M and Mamba2-1.3B backbones on sequences up to $32k$. As shown in Table~\ref{tab:mamba_ppl}, HMM consistently achieves the lowest PPL across all lengths, 
significantly outperforming vanilla Mamba and surpassing LongMamba, demonstrating superior long-context language modeling capability.

\noindent\textbf{Reasoning on LongBench-E.} 
We evaluate \textsc{HMM} on LongBench-E~\cite{bai2024longbench} following standard protocols~\cite{ye2025longmamba,ben-kish2025decimamba,azizi2025mambaextend,ye2025lamb}. 
As shown in Table~\ref{tab:longbench_results}, HMM achieves the best average accuracy among Mamba-based models, with \textbf{20.40\%} on Mamba2-780M and \textbf{22.50\%} on Mamba2-1.3B. 
The gains are most evident on TR, TQA, MN, LCC, and RB, where successful prediction requires stable access to task-relevant semantics across distant segments rather than local token-level retention. 
This suggests that HMM benefits from its hierarchical semantic memory, which provides a less decaying semantic access path in long sequences. 
In contrast, methods such as MambaExtend and LongMamba mainly improve state dynamics or receptive fields, but do not explicitly organize historical context into semantic units, making HMM more effective on tasks requiring global semantic coherence.

\begin{table*}[t]
    \centering
    \caption{Benchmark results on LongBench-E tasks. We compare \textsc{HMM} against Vanilla Mamba and recent long-context Mamba variants across various task categories. For task name abbreviations, refer to Appendix~\ref{Task Name}.}
    \scalebox{1.0}{
    \resizebox{\textwidth}{!}{
    \begin{tabular}{cl cccccccccccccc}
        \toprule
        \multirow{2}{*}{\textbf{Model}} & \multirow{2}{*}{\textbf{Method}} & 
        \multicolumn{2}{c}{\textbf{Synthetic}} & 
        \multicolumn{2}{c}{\textbf{Summary}} & 
        \multicolumn{2}{c}{\textbf{Single-doc QA}} & 
        \multicolumn{2}{c}{\textbf{Multi-doc QA}} & 
        \multicolumn{3}{c}{\textbf{Few-shot Learning}} & 
        \multicolumn{2}{c}{\textbf{Coding}} & 
        \multirow{2}{*}{\textbf{Avg.}} \\
        
        \cmidrule(lr){3-4} \cmidrule(lr){5-6} \cmidrule(lr){7-8} \cmidrule(lr){9-10} \cmidrule(lr){11-13} \cmidrule(lr){14-15}
        
         & & PC & PR & GR & MN & MQA & QA & 2WM & HQA & SS & TR & TQA & LCC & RB & \\
        \midrule
        \multirow{1}{*}{Mamba-2.8B}
            & DeciMamba & 0.50 & 1.50 & 14.86 & 24.58 & 18.58 & 8.91 & 9.06 & 4.46 & 7.34 & 0.50 & 12.61 & 8.67 & 10.96 & 9.43 \\
        \midrule
        
        \multirow{6}{*}{Mamba2-780M} 
            & Vanilla     & 0.78 & 2.76 & 5.84 & 7.40 & 3.45 & 1.53 & 1.41 & 1.13 & 2.23 & 9.00 & 6.35 & 25.83 & 12.64 & 6.18 \\
            & MambaExtend & 0.00 & 0.00 & 0.00 & 0.02 & 2.17 & \textbf{7.96} & \textbf{11.15} & \textbf{8.29} & 0.89 & 21.00 & 11.00 & 26.28 & 17.55 & 8.18\\
            & LongMamba   & \underline{0.90} & \textbf{3.79} & \underline{11.50} & \underline{17.22} & 10.11 & 4.13 & 6.28 & 4.72 & 14.7 & 32.5 & 43.13 & \textbf{47.77} & \underline{43.05} & 18.45 \\
            & LAMB      &\textbf{2.01} & \underline{3.44} & \textbf{12.19} & 9.42 & \underline{11.22} & 3.56 & 5.99 & 5.22 & \textbf{25.83} & \underline{38.46} & \underline{46.47} & 44.24 & 36.2 & \underline{18.79} \\
            & HMM         & 0.75 & 3.18 & 11.4 & \textbf{17.34} & \textbf{12.08} & \underline{4.49} & \underline{8.25}  & \underline{5.66} & \underline{18.59} & \textbf{44.5} & \textbf{49.31} & \underline{46.00} & \textbf{43.69} & \textbf{20.40} \\
        \midrule
        
        \multirow{6}{*}{Mamba2-1.3B} 
            & Vanilla     & 1.29 & 0.81 & 7.66 & 11.46 & 5.84 & 2.19 & 2.31 & 2.88 & 4.69 & 14.67 & 10.08 & 25.71 & 22.85 & 8.65 \\
            
            & MambaExtend & 0.00 & 0.00 & 0.01 & 0.02 & 0.70 & \textbf{7.34} & \underline{6.33} & \textbf{9.50} & 0.00 & 15.00 & 12.00 & 25.19 & 17.64 & 7.21 \\
            & LongMamba   & \underline{2.02} & \underline{3.51} & \textbf{14.33} & 10.28 & \textbf{14.73} & 5.14 & 5.73 & 5.52 & 14.00 & 21.67 & \underline{48.74} & \underline{42.99} & \underline{36.75} & 17.34 \\
            &LAMB       & \textbf{2.90} & \textbf{4.11} & \underline{13.09} & \underline{11.97} & \underline{12.63} & 5.72 & 5.76 & \underline{7.10} & \textbf{22.83} & \underline{51.28} & 48.03 & 38.85 & 32.34 & \underline{19.74} \\
            & HMM         & 1.22 & 1.25 & 12.54 & \textbf{21.3} & \underline{12.63} & \underline{6.83} & \textbf{9.89}  & 5.63 & \underline{19.12} & \textbf{52.5} & \textbf{51.41} & \textbf{49.83} & \textbf{48.32} & \textbf{22.50}  \\
            
        \bottomrule
    \end{tabular}%
    }}
    \label{tab:longbench_results}
\end{table*}

\noindent\textbf{Passkey Retrieval.} Beyond long-context perplexity, exact-retrieval tasks provide a more stringent test of whether models can preserve and recover specific information from distant positions~\cite{liu-etal-2024-lost}. We employ the Passkey Retrieval task following the MambaExtend protocol~\cite{azizi2025mambaextend}. Both vanilla Mamba and HMM undergo lightweight fine-tuning (20 steps) solely for format alignment. We evaluate exact retrieval accuracy across sequence lengths ranging from $1k$ to $64k$ at varying depths, HMM improves retrieval success by 34.3-37.1\%, with results presented in Fig.~\ref{fig:passkey_retrieval}. 
This result indicates that hierarchical semantic memory offers an effective alternative to discretization-step calibration for long-range recall.

\begin{figure}[t]
    \centering
    \captionsetup{font=normalsize,labelfont=normalfont,textfont=normalfont}
    \includegraphics[width=0.98\textwidth]{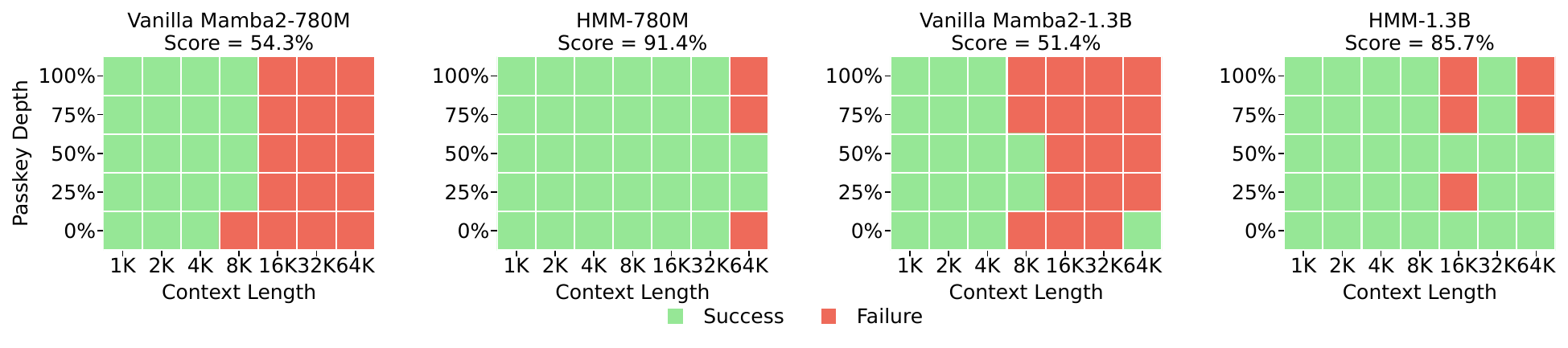}
    \vspace{-0.25cm}
    \caption{Passkey retrieval accuracy heatmaps. Green indicates success and red indicates failure.}
    \vspace{-0.45cm}
    \label{fig:passkey_retrieval}
\end{figure}

\noindent\textbf{Memory Overhead and Throughput Analysis.} As shown in Fig.\ref{fig:memory_tau_layer}a, we measure the peak GPU memory usage under different context lengths and compare \textsc{HMM} with the Mamba2 baseline under identical batch size and numerical precision settings. Across all sequence lengths, HMM matches the peak memory footprint of Mamba2, incurring no additional memory overhead and preserving the efficiency and scalability of SSMs.




\vspace{-0.2cm}
\begin{figure}[H]
    \centering
    \captionsetup{font=normalsize,labelfont=normalfont,textfont=normalfont}

    \includegraphics[width=\linewidth,
        trim=0cm 7cm 0cm 3cm,
        clip]{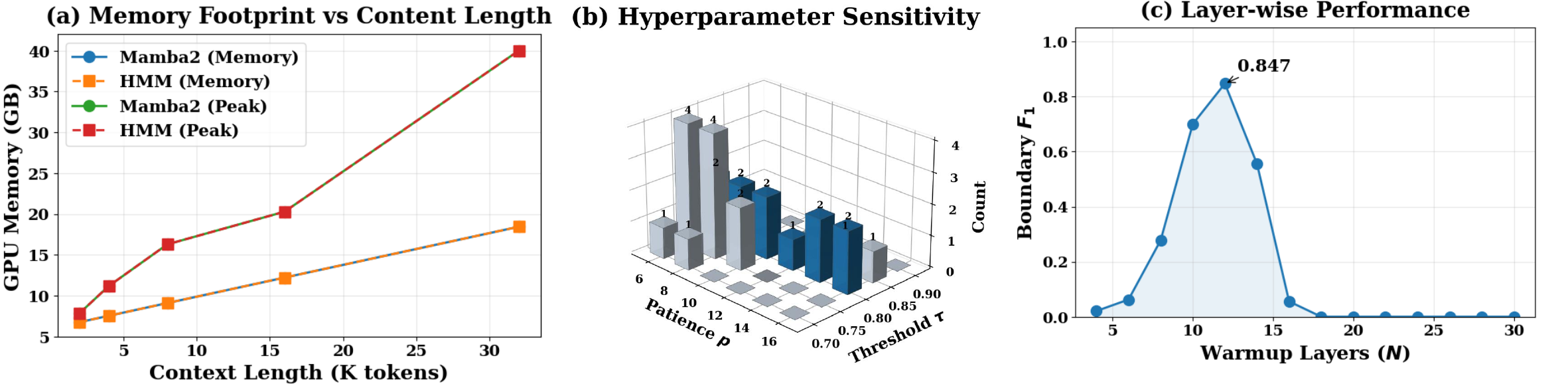}
    \vspace{-0.15cm}
    \caption{
    (a) Memory footprint comparison between Mamba2 and HMM across context lengths.
    (b) Sensitivity to segmentation hyperparameters $(\tau, p)$.
    (c) Layer-wise boundary detection performance.
    }
    \vspace{-0.45cm}
    \label{fig:memory_tau_layer}
\end{figure}
Also, we report detailed throughput results across varying context lengths (e.g., 2k--32k). As shown in Table~\ref{tab:latency}, HMM introduces only a modest prefill overhead ($\approx$1.13$\times$--1.24$\times$). During decoding, HMM achieves throughput comparable to Mamba, as paragraph-level memory is computed once during prefill and reused without additional per-step overhead.
\vspace{-0.4cm}
\begin{table}[H]
\caption{Latency comparison}
\centering
\small
\setlength{\tabcolsep}{5pt}
\begin{tabular}{l c c c c c}
\toprule
Model & Context Length & Prefill (s) & Prefill (K tok/s) & Decode (s) & Decode (tok/s) \\
\midrule
Mamba2 & 2048  & 0.058 & 35.40 & 4.21 & 30.40 \\
HMM    & 2048  & 0.072 & 28.32 & 4.40 & 29.07 \\
\midrule
Mamba2 & 4096  & 0.074 & 55.26 & 4.53 & 28.29 \\
HMM    & 4096  & 0.084 & 48.50 & 4.63 & 27.63 \\
\midrule
Mamba2 & 8192  & 0.126 & 64.79 & 4.36 & 29.34 \\
HMM    & 8192  & 0.151 & 54.68 & 4.54 & 28.19 \\
\midrule
Mamba2 & 16384 & 0.249 & 65.67 & 4.10 & 31.22 \\
HMM    & 16384 & 0.292 & 56.04 & 4.31 & 29.71 \\
\bottomrule
\end{tabular}

\label{tab:latency}
\end{table}
\vspace{-0.5cm}
\subsection{Ablation Study}
\label{sec:Ablation Studies of HMM}
We perform ablation studies from four perspectives:
(1) Hyperparameter analysis: the influence of threshold and patience parameters $(\tau, p)$ and the choice of semantic encoding layer;
(2) Effectiveness of hierarchical memory: comparison with a parameter-matched Mamba baseline without hierarchical memory;
(3) The optimal memory compression rate;
(4) Comparison to Transformer baseline and RAG method.
Unless otherwise specified, all experiments are conducted on 130M model scale.

\noindent\textbf{Segmentation Hyperparameters.} 
To analyse the effect of $(\tau, p)$ on segmentation performance, we perform ablations on five randomly sampled multi-topic long-form sequences from the Pile, fixing the injection layer to isolate structural factors. We sweep $\tau \in \{0.7, 0.75, 0.8, 0.85, 0.9\}$ and $p \in \{6, 8, 10, 12, 14, 16\}$.
As shown in Fig.~\ref{fig:memory_tau_layer}b, the segmentation quality and downstream performance vary smoothly across a broad range of values, indicating that the proposed segmentation mechanism is not sensitive to precise hyperparameter tuning. 


\noindent\textbf{Semantic Encoding Layer.} To identify the optimal layer for semantic encoding, we evaluate the semantic segmentation quality with different PLS extraction layers from the backbone with controlled samples. Fig.~\ref{fig:memory_tau_layer}(c) presents the segmentation results on long sequences sampled from the Pile dataset. A higher F1 score indicates better alignment with the underlying semantic boundaries. For the 1.3B model, the segmentation accuracy reaches its peak around layer 12 (48 layers in total). This aligns well with the intuition that sensory memory primarily resides in the early network layers. Detailed experimental settings are provided in Appendix~\ref{app:segmentation_eval}. 

\textbf{Effectiveness of Hierarchical Memory.}
To rule out long-context fine-tuning as the source of performance gains, we conduct a controlled comparison on sequences lengths ranging up to $32k$. Both \textsc{HMM} and the Mamba backbone are trained on the same 300 long-form sample sequences; \textsc{HMM} updates only its working memory, while the baseline unfreezes an equal number of backbone parameters. As shown in Tables~\ref{tab:ppl_finetune_300} and~\ref{tab:acc_finetune_300}, the parameter-matched Mamba baseline fails to achieve comparable improvements, confirming that the gains stem from the hierarchy of HMM rather than long-context fine-tuning.

\vspace{-0.3cm}
\begin{table}[H]
\centering
\caption{PPL comparison under different context lengths.}
\label{tab:ppl_finetune_300}
\scalebox{0.95}{
\begin{tabular}{lcccc}
\toprule
\textbf{Model} & \textbf{$4k$} & \textbf{$8k$} & \textbf{$16k$} & \textbf{$32k$} \\
\midrule
Mamba2  & 21.19 & 776.89 & 2364.04 & 2856.38 \\
HMM & \textbf{16.25} & \textbf{17.01}  & \textbf{19.87}   & \textbf{29.91}   \\
\bottomrule
\end{tabular}}
\vspace{-0.35cm}
\end{table}
\vspace{-0.3cm}
\begin{table}[H]
\vspace{-0.15cm}
\centering
\caption{Acc. (\%) comparison on long-context reasoning tasks.}
\label{tab:acc_finetune_300}
\scalebox{0.95}{
\begin{tabular}{lcccccc}
\toprule
\textbf{Model} & \textbf{2WM} & \textbf{QA} & \textbf{MN} & \textbf{HQA} & \textbf{GR} & \textbf{Avg.} \\
\midrule
Mamba2 & 2.53 & 1.74 & 5.80 & 0.46 & 3.44 & 2.80 \\
HMM & \textbf{8.98} & \textbf{4.59} & \textbf{15.11} & \textbf{5.57} & \textbf{10.42} & \textbf{8.90} \\
\bottomrule
\end{tabular}}
\vspace{-0.4cm}
\end{table}

\textbf{Memory Compression.} We study the optimal memory compression rate by varying the LTM state dimension and check the performance of HMM. As shown in Table~\ref{tab:dmem_ablation}, increasing LTM dimension improves performance up to a moderate scale, after which the gain saturates. We therefore set $M=128$, providing a balanced trade-off between expressivity and efficiency.
\vspace{-0.3cm}
\begin{table}[H]
\centering
\small
\caption{Effect of LTM dimension on performance.}
\label{tab:dmem_ablation}

\setlength{\tabcolsep}{4pt}

\begin{tabular}{c|c|cccccc}
\toprule
$d_{\text{mem}}$
& PPL$\downarrow$
& 2WM
& QA
& MN
& HQA
& GR
& Avg$\uparrow$ \\
\midrule

64
& 327.38
& 0.78
& 0.02
& 1.07
& 0.01
& 0.56
& 0.49 \\

96
& 9.83
& 8.05
& 3.06
& 13.18
& 4.33
& 5.38
& 6.80 \\

128
& \textbf{8.92}
& \textbf{8.56}
& \textbf{3.58}
& \textbf{13.58}
& \textbf{4.73}
& \textbf{5.86}
& \textbf{7.26} \\

160
& 10.89
& 6.79
& 3.17
& 12.77
& 4.05
& 5.72
& 6.50 \\

192
& 36.82
& 2.40
& 2.68
& 7.32
& 2.29
& 4.17
& 3.77 \\

256
& 315.28
& 1.96
& 0.16
& 2.33
& 0.21
& 0.89
& 1.11 \\

\bottomrule
\end{tabular}
\end{table}
\vspace{-0.2cm}
\textbf{Comparison to Transformer.} We include results from Pythia-1.4B, which is adopted as a Transformer baseline of the HMM's Mamba backbone. This enables a more meaningful comparison under similar training conditions. As shown in Table~\ref{tab:pythia_longbench}, HMM even slightly surpasses the baseline, indicating the potential of hierarchical memory.
\vspace{-0.2cm}
\begin{table}[H]
\centering
\small
\caption{Comparison with the Transformer baseline Pythia-1.4B on LongBench}
\setlength{\tabcolsep}{6pt}
\begin{tabular}{l c c c c c c}
\toprule
Model 
& Single-Doc 
& Multi-Doc 
& Summary 
& Synthetic 
& Code \& Class 
& Avg ($\uparrow$) \\
\midrule
Pythia-1.4B & 13.17 & 4.49 & 13.97 & 2.32 & 33.94 & 20.21 \\
HMM-1.3B    & \textbf{11.83} & \textbf{8.03} & \textbf{14.33} & \textbf{1.81} & \textbf{49.08} & \textbf{21.23} \\
\bottomrule
\end{tabular}
\label{tab:pythia_longbench}
\end{table}
\vspace{-0.2cm}
\textbf{Comparison to RAG.} Finally, we compare HMM with a standard RAG-enhanced Mamba\cite{lewis2020retrieval} baseline on both PPL and LongBench-E. As shown in Fig.~\ref{fig:RAG_Comparison}, HMM achieves consistently better perplexity and long-context reasoning performance. We attribute this to HMM’s PLS compression and retrieval within the recurrent modeling process, which provides more stable long-range semantic access. In contrast, RAG-based methods rely on external chunks concatenation, which introduces fragmented context representations and additional prefill overhead. HMM achieves ($\approx$1.6$\times$--2.3$\times$) higher prefill throughput than RAG-Mamba under long-context settings. Detailed throughput and latency analyses are provided in Appendix~\ref{RAG-latency}.




\begin{figure}[H]
    \centering
    \includegraphics[
        width=\linewidth,
        trim=2cm 4cm 2cm 4cm,
        clip
    ]{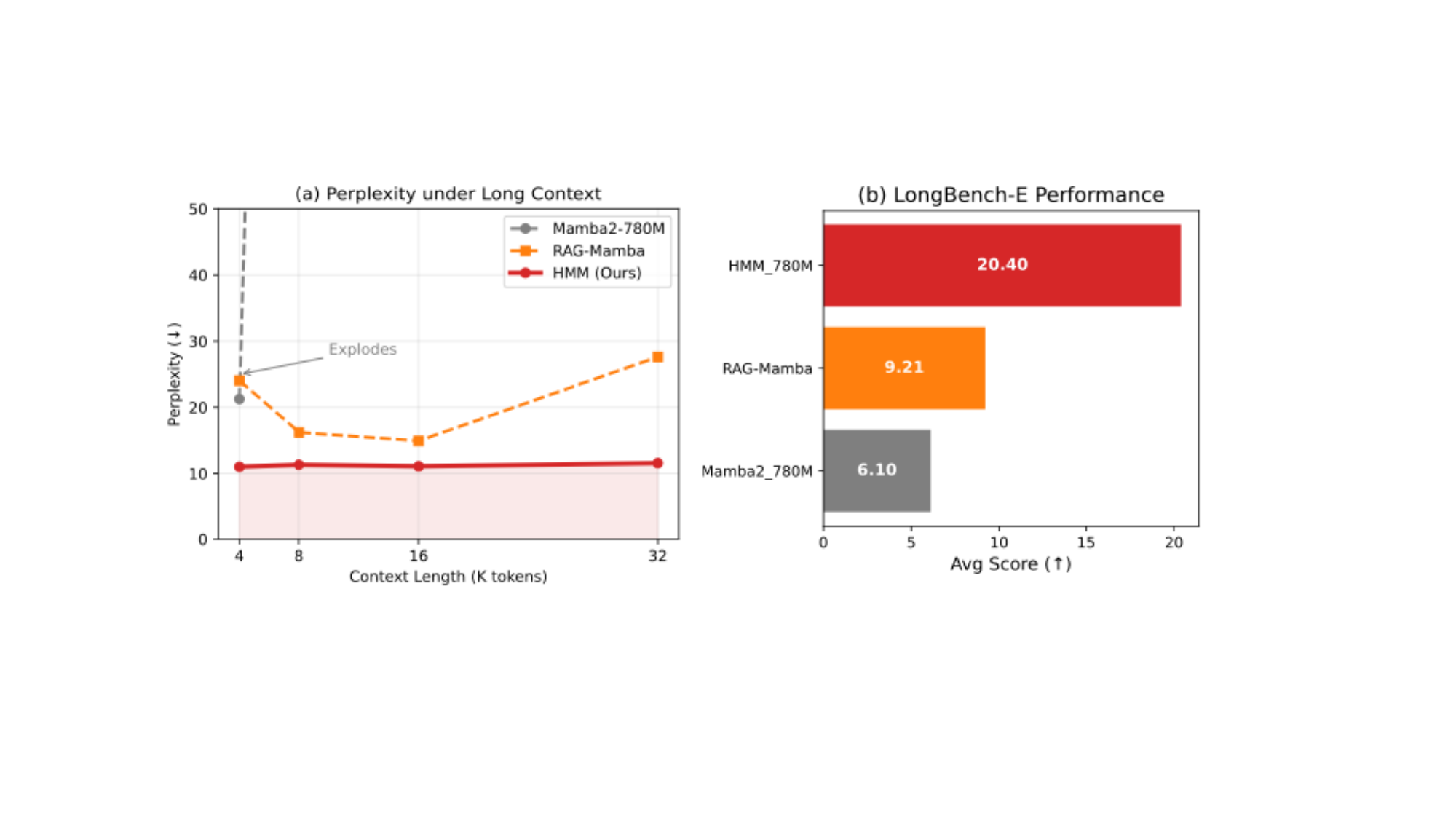}
    \vspace{-1cm}
    \caption{
    (a) Perplexity comparison on different context lengths.
    (b) Comparison on Longbench-E.
    }
    
    \label{fig:RAG_Comparison}
\end{figure}

\section{Conclusion}
In this work, we uncover the inherent decoupling between PPL and long-context reasoning in previous works, demonstrating that optimizing PPL alone is insufficient for effective long-context modeling. We further present theoretical analyses and identify the semantic aliasing issue prevalent in RLAs. Inspired by the hierarchical human memory, we propose the HMM which circumvents the representation bottleneck imposed by fixed recurrent states. By integrating parametric working memory for semantic encoding and retrieval alongside non-parametric LTM storage, HMM achieves strong cross-task generalization and consistently outperforms existing long-context-enhanced Mamba variants. Our work reveals the potential of brain-inspired hierarchical memory design in mitigating inherent representation limitations of recurrent architectures and advancing long-context modeling capabilities. More discussions of the limitations and broader impacts of this work are presented in Appendix~\ref{LI} and ~\ref{BI}.

\clearpage
\bibliographystyle{plain}
\bibliography{example_paper}

\clearpage
\appendix

\section{Semantic Aliasing}
\label{sec:proof_aliasing}
Semantic aliasing means that distinct histories map to nearly indistinguishable states, so the model cannot resolve conflicting contexts. We show that semantic aliasing is unavoidable for a fixed-dimensional recurrent state.
Although Mamba uses input-dependent selection in the recurrence $h_t=\Phi_t h_{t-1}+\Gamma_t x_t$ (Eq.~\eqref{eq:ltv_ssm}), it does not change the state dimension $d_h$, which limits the number of states that remain distinguishable at a finite resolution.

\begin{theorem}[Inevitability of $\epsilon$-aliasing]
\label{the:aliasing}
Let $\mathcal{H} \subset \mathbb{R}^{d_{h}}$ be the set of reachable hidden states, assumed to be bounded such that $\mathcal{H} \subseteq R\mathcal{B}_{2}^{d_{h}}$ (a ball of radius $R$). Let $V$ be the vocabulary. If the sequence length $T$ satisfies:
\begin{equation}
T > \frac{d_{h} \log_{2}(3R/\epsilon)}{\log_{2}|V|}, 
\end{equation}
then the mapping $\mathcal{F}_{T}: V^{T} \rightarrow \mathcal{H}$ is not injective at resolution $\epsilon$. That is, there exist at least two distinct histories $x_{1:T}^{(a)} \neq x_{1:T}^{(b)}$ such that $\|\mathcal{F}_{T}(x_{1:T}^{(a)}) - \mathcal{F}_{T}(x_{1:T}^{(b)})\|_{2} \leq \epsilon$.
\end{theorem}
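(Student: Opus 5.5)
The argument is a covering-number pigeonhole. First, count the inputs: the domain $V^{T}$ contains $|V|^{T}$ distinct histories. The hypothesis on $T$ rearranges, after multiplying by $\log_2|V|>0$ (we need $|V|\ge 2$), to
\[
|V|^{T} > (3R/\epsilon)^{d_h}.
\]

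Next, bound how many $\epsilon$-distinguishable states fit inside the ball $R\mathcal{B}_2^{d_h}$. We use the standard volumetric estimate: the ball $R\mathcal{B}_2^{d_h}$ admits an $(\epsilon/2)$-net, or equivalently a partition into at most $N$ cells of diameter at most $\epsilon$, with $N\le(1+2R/(\epsilon/2))^{d_h}$. To prove this, take a maximal $\epsilon/2$-separated set. The balls of radius $\epsilon/4$ around its points are disjoint and lie in $(R+\epsilon/4)\mathcal{B}_2^{d_h}$. Comparing volumes gives $N\le(1+4R/\epsilon)^{d_h}$, and maximality makes the set an $\epsilon/2$-net.

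\textbf{Main obstacle: the constant.} This volumetric bound yields $(1+4R/\epsilon)^{d_h}$, which is not obviously $\le(3R/\epsilon)^{d_h}$. I would instead pigeonhole on the separation directly, skipping the net.

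Suppose the images of all histories were pairwise more than $\epsilon$ apart. Then balls of radius $\epsilon/2$ around them are disjoint and lie in $(R+\epsilon/2)\mathcal{B}_2^{d_h}$. Comparing volumes, the number of histories is at most $(1+2R/\epsilon)^{d_h}$. When $\epsilon\le R$ this is $\le(3R/\epsilon)^{d_h}$, since $1+2R/\epsilon\le 3R/\epsilon$ exactly when $\epsilon\le R$.

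If $\epsilon> 2R$ then $\mathcal{H}$ has diameter at most $2R<\epsilon$, and any two distinct histories already collide, so that case is trivial. The intermediate regime $R<\epsilon\le 2R$ is the same: the diameter $2R$ is at most $\epsilon$, so again any two distinct histories collide.

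Conclusion. Together with $|V|^T>(3R/\epsilon)^{d_h}$, the separated-set bound gives a contradiction, so some two distinct histories $x^{(a)}_{1:T}\ne x^{(b)}_{1:T}$ satisfy $\|\mathcal{F}_T(x^{(a)}_{1:T})-\mathcal{F}_T(x^{(b)}_{1:T})\|_2\le\epsilon$.

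Any histories that map to identical states are also a valid collision. So injectivity of $\mathcal{F}_T$ itself is never needed: we simply pick distinct preimages, and the images counted in the argument are the images of $|V|^T$ distinct histories, possibly with repeats.

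Finally, the argument uses only boundedness of $\mathcal{H}$ and the state dimension $d_h$. It therefore holds regardless of the input-dependent $\Phi_t,\Gamma_t$, which matches the remark preceding the theorem.
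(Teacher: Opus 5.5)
\textbf{Main argument: correct, and sharper than the paper's.} For $\epsilon\le R$ your proof is correct, but it follows a different route from the paper's. The paper bounds the covering number, $\mathcal{N}(\mathcal{H},\epsilon)\le(1+2R/\epsilon)^{d_h}\le(3R/\epsilon)^{d_h}$, and pigeonholes the $|V|^T$ histories into radius-$\epsilon$ covering balls. Two points in the same radius-$\epsilon$ ball are only guaranteed to lie within $2\epsilon$ of each other. So the paper's argument, as written, yields $2\epsilon$-aliasing at the stated threshold. Getting $\epsilon$ that way needs radius-$\epsilon/2$ balls and hence the weaker count $(1+4R/\epsilon)^{d_h}$, which is exactly the obstacle you spotted. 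Your packing argument avoids this. You assume all images are pairwise more than $\epsilon$ apart and place disjoint $\epsilon/2$-balls inside $(R+\epsilon/2)\mathcal{B}_2^{d_h}$. This gives the conclusion $\le\epsilon$ directly with the count $(1+2R/\epsilon)^{d_h}$, so it is the correct way to obtain the stated constant. Both routes need $\epsilon\le R$ for the step $1+2R/\epsilon\le 3R/\epsilon$; the paper uses it without saying so.

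\textbf{The step that fails: the regime $R<\epsilon<2R$.} Your claim that ``the diameter $2R$ is at most $\epsilon$'' is backwards there: $2R>\epsilon$, so the diameter bound gives nothing. This cannot be repaired, because the statement is false in part of that range. Take $d_h=1$, $R=1$, $\epsilon=1.8$, $|V|=2$, $T=1$. The threshold is $\log_2(5/3)\approx 0.74<1=T$, so the hypothesis holds. Yet $\mathcal{F}_1$ can send the two histories to $\pm 1$, for example with scalar embeddings $\pm1$ and $\Gamma_1=1$. Their distance is $2>\epsilon$, so there is no collision. Your cases $\epsilon\le R$ and $\epsilon\ge 2R$ are fine; the latter is genuinely trivial since $|V|\ge 2$. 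You should therefore add $\epsilon\le R$ as an explicit hypothesis, or treat $\epsilon\ge 2R$ separately, rather than claim the result for every $\epsilon$.
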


\begin{proof}
Let $\mathcal{H}\subset\mathbb{R}^{d_h}$ denote the set of reachable hidden states under the data distribution.
We assume $\mathcal{H}$ is bounded: there exists $R>0$ such that
$\mathcal{H}\subseteq R B_2^{d_h}$, where $B_2^{d_h}=\{h\in\mathbb{R}^{d_h}:\|h\|_2\le 1\}$.
This is a standard stability assumption for discretized SSM dynamics with bounded inputs as commonly enforced in practice for Mamba-style models~\cite{gu2024mamba}.

Let $\mathcal{N}(\mathcal{H},\epsilon)$ be the $\epsilon$-covering number of $\mathcal{H}$ under the $\ell_2$ metric,
i.e., the minimum number of $\ell_2$ balls of radius $\epsilon$ needed to cover $\mathcal{H}$. We use $\ell_2$ to quantify indistinguishability in the hidden space; other norms lead to analogous statements with the corresponding covering bounds. Intuitively, $\mathcal{N}(\mathcal{H},\epsilon)$ upper bounds how many state ``slots'' remain reliably distinguishable at resolution $\epsilon$. Since $\mathcal{H}\subseteq R B_2^{d_h}$, a standard volumetric bound gives
\begin{equation}
\mathcal{N}(\mathcal{H},\epsilon)
\le \mathcal{N}(R B_2^{d_h},\epsilon)
\le \left(1+\frac{2R}{\epsilon}\right)^{d_h}
\le \left(\frac{3R}{\epsilon}\right)^{d_h}.
\label{eq:covering_number}
\end{equation}

Let $\mathcal{V}$ be the vocabulary and $\mathcal{X}^T=\mathcal{V}^T$ the set of length-$T$ token histories.
Let $\mathcal{F}_T:\mathcal{X}^T\to\mathcal{H}$ be the history-to-state mapping induced by Eq.~\eqref{eq:ltv_ssm},
defined by $\mathcal{F}_T(x_{1:T})=h_T$.

By Eq.~\eqref{eq:covering_number}, the number of $\epsilon$-distinguishable regions in $\mathcal{H}$ is at most
$\mathcal{N}(\mathcal{H},\epsilon)\le (3R/\epsilon)^{d_h}$.
Meanwhile, the number of distinct length-$T$ token histories is $|\mathcal{X}^T|=|\mathcal{V}|^T$.
If $|\mathcal{V}|^T>\mathcal{N}(\mathcal{H},\epsilon)$, then by the pigeonhole principle,
two distinct histories must fall into the same $\epsilon$-ball, implying
$\|\mathcal{F}_T(x^{(a)}_{1:T})-\mathcal{F}_T(x^{(b)}_{1:T})\|_2\le\epsilon$.
Rearranging $|\mathcal{V}|^T>(3R/\epsilon)^{d_h}$ yields the stated condition on $T$.
\end{proof}

In Mamba, the input $x_t\in\mathbb{R}^{d_x}$ is a continuous token embedding.
However, standard language modeling draws tokens from a finite vocabulary $\mathcal{V}$, and the embedding table induces a finite set
$\mathcal{E}=\{e(v):v\in\mathcal{V}\}\subset\mathbb{R}^{d_x}$.
Therefore, the realizable embedding-history space $\mathcal{X}^T_{\mathrm{emb}}$ contains the discrete subset $\mathcal{E}^T$ with
$|\mathcal{E}^T|=|\mathcal{V}|^T$.
Thus, counting $|\mathcal{V}|^T$ provides a valid \emph{lower bound} on the number of distinguishable histories; allowing more general continuous inputs can only enlarge the set of possible histories and cannot remove $\epsilon$-aliasing.

If distinct histories collapse to indistinguishable states at $\epsilon$-resolution, then any state-based readout cannot reliably condition predictions on their long-range differences, yielding an intrinsic accuracy ceiling on retrieval-style long-context tasks. This perspective aligns with recent analyses of recency and over-smoothing in SSMs~\cite{wang2025understandingmitigatingbottlenecksstate} and retrieval bottlenecks in fixed-size recurrent models~\cite{wen2024rnnstransformersyetkey}.

\section{Details of EMA-based Semantic Segmentation}
\label{ema_segmentation}

\begin{algorithm}[H]
\SetAlgoLined
\DontPrintSemicolon
\caption{EMA-based semantic segmentation and PLS construction}
\label{alg:prototype_ema}

\KwIn{Hidden states $\mathbf{h}_{1:L}\in\mathbb{R}^{L\times D}$, threshold $\tau$, patience $p$, EMA rate $\alpha$, minimum segment length $m$}
\KwOut{PLS sequence, $\mathbf{S}=(S_1,S_2,\ldots,S_K)$}

Normalize hidden states: 

$\mathbf{h}_t \leftarrow \mathrm{Norm}(\mathbf{h}_t)$ for $t=1,\dots,L$\;

Initialize current prototype $S_k \leftarrow \mathbf{h}_1$\;

Initialize prototype list $\mathbf{S} \leftarrow [\ ]$\;

Initialize drift counter $c_k \leftarrow 0$, 

segment length $\ell_k \leftarrow 1$\;

\For{$t = 2$ \KwTo $L$}{
    Compute semantic similarity:\;
    $\mathrm{sim} \leftarrow \cos(S_k, \mathbf{h}_t)$\;

    \If{$\mathrm{sim} \ge \tau$}{
        $S_k \leftarrow \mathrm{Norm}\big((1-\alpha) S_k + \alpha\, \mathbf{h}_t\big)$\;
        
        $c_k \leftarrow 0$\;
        
        $\ell_k \leftarrow \ell_k + 1$\;
    }\Else{
        $c_k \leftarrow c_k + 1$\;
        
        \If{$c_k > p$ \textbf{and} $\ell_k \ge m$}{
            Append $S_k$ to $\mathbf{S}$\;
            
            Initialize new prototype: 
            $S_k \leftarrow \mathbf{h}_t$\;
            
            $c_k \leftarrow 0$, 
            
            $\ell_k \leftarrow 1$\;
        }
    }
}

Append final $S_k$ to $\mathbf{S}$\;

\Return $\mathbf{S}$\;
\end{algorithm}

\section{Mitigating Semantic Aliasing via HMM}
\label{sec:proof_jacobian}

\begin{theorem}[Jacobian Rank Preservation]
\label{the:jaco}
Let $\tilde{y}_T$ be the output of HMM at time $T$, and $x_t$ be the input at time $t$ ($t < T$). In the presence of the hierarchical retrieval mechanism, the Jacobian $\mathbf{J}_{\mathrm{HMM}}$ maintains a non-vanishing rank component; that is, the rank of $\mathbf{J}_{\mathrm{HMM}}$ does not decay to zero as $(T-t)\to\infty$.
\end{theorem}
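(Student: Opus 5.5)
We split the Jacobian as in Sec.~\ref{sec:theore semantic}, $\mathbf{J}_{\mathrm{HMM}} = \mathbf{J}_{\mathrm{rec}} + \mathbf{J}_{\mathrm{mem}}$. We then show that $\mathbf{J}_{\mathrm{rec}}\to 0$ as $T-t\to\infty$, while $\mathbf{J}_{\mathrm{mem}}$ has a lower bound, on its norm and on its smallest nonzero singular value, that does not depend on $T$. A standard perturbation argument then keeps $\operatorname{rank}\mathbf{J}_{\mathrm{HMM}}\ge 1$ for all large $T-t$.

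\textbf{Step 1 (recurrent path).} We read off $\mathbf{J}_{\mathrm{rec}} = \mathbf{C}_T\bigl(\prod_{k=t+1}^{T}\Phi_k\bigr)\Gamma_t$ from the unrolled recurrence Eq.~\eqref{eq:unrolled_ssm}. Under the stability assumption $\|\Phi_k\|\le 1-\epsilon$ of Sec.~\ref{sec:prelim}, and with $\mathbf{C}_T,\Gamma_t$ bounded, this gives $\|\mathbf{J}_{\mathrm{rec}}\|\le c(1-\epsilon)^{T-t}\to 0$.

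\textbf{Step 2 (memory path).} Let $k^*$ be the PLS segment that contains $t$. We assume it is selected by the top-$K$ retrieval Eq.~\eqref{eq:topk} for the paragraph containing $T$, and we treat the selection as locally constant, i.e.\ piecewise differentiable away from ties. The chain rule then gives
\[
\mathbf{J}_{\mathrm{mem}} = \frac{1}{|\mathcal{R}|}\,\mathbf{W}_p\,\frac{\partial M_{k^*}}{\partial S_{k^*}}\,\frac{\partial S_{k^*}}{\partial h_t}\,\frac{\partial h_t}{\partial x_t}.
\]
We bound each factor below, independently of $T$:
\begin{itemize}
\item $\partial h_t/\partial x_t = \Gamma_t \neq 0$.
\item The EMA in Eq.~\eqref{ema} gives $\partial S_{k^*}/\partial h_t$ as a product of at most $\ell_{k^*}$ normalization Jacobians times the factor $\alpha(1-\alpha)^{\ell}$. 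Here $\ell$ is the number of steps remaining in the paragraph, which depends only on the paragraph length and not on $T$. Each normalization Jacobian is the projection onto the tangent space, so the product is nonzero on the component of $h_t$ orthogonal to $S_{k^*}$.
\item $\partial M_{k^*}/\partial S_{k^*}$ is the LN Jacobian composed with $\mathbf{W}_2\,\mathrm{diag}(\sigma'(\cdot))\,\mathbf{W}_1$. Since SiLU has $\sigma'\neq 0$ almost everywhere, this factor is nonzero for generic weights.
\item $\mathbf{W}_p$ has full column rank generically.
\end{itemize}
Hence $\mathbf{J}_{\mathrm{mem}}\neq 0$, with $\|\mathbf{J}_{\mathrm{mem}}\|\ge \delta>0$, where $\delta$ depends on $\alpha$, the maximum paragraph length and the weights, but not on $T$.

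\textbf{Step 3 (conclude).} By Weyl's inequality, $\sigma_1(\mathbf{J}_{\mathrm{HMM}})\ge \delta - c(1-\epsilon)^{T-t}$. This is positive once $T-t > \log(c/\delta)/\log(1/(1-\epsilon))$, and in the limit the rank is at least $\operatorname{rank}\mathbf{J}_{\mathrm{mem}}\ge 1$.

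\textbf{Main obstacle.} The hardest part is Step~2. It requires justifying differentiability through the discrete top-$K$ selection and the segmentation boundaries. It also requires ruling out degenerate alignment, where the normalization projection annihilates the direction $\Gamma_t$, or where $h_t$ is parallel to $S_{k^*}$. I would handle this with a genericity assumption: the set of weights and inputs on which the product of these factors vanishes has measure zero, because each factor is analytic and not identically zero. I would also state explicitly the assumption that retrieval selects segment $k^*$, since without it the memory term is identically zero.
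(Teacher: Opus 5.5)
Your argument is sound at the paper's level of rigor and uses the same decomposition: an exponentially decaying recurrent term plus a $T$-independent memory term. You close it differently, though.

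\textbf{The final step.} The paper finishes with $\mathrm{rank}(A+B)\ge\mathrm{rank}(B)-\mathrm{rank}(A)$ and the claim $\mathrm{rank}(\mathbf{J}_{\mathrm{recurrent}})\to 0$. You instead use the singular-value perturbation bound $\sigma_r(\mathbf{J}_{\mathrm{HMM}})\ge\sigma_r(\mathbf{J}_{\mathrm{mem}})-\|\mathbf{J}_{\mathrm{rec}}\|$, which needs only norm decay. Yours is the step that actually works, for two reasons.
\begin{itemize}
\item Rank is only lower semicontinuous, so $\|\mathbf{J}_{\mathrm{rec}}\|\to 0$ does not force $\mathrm{rank}(\mathbf{J}_{\mathrm{rec}})$ to zero.
\item For Mamba's diagonal $\bar{\mathbf{A}}_t$ with entries in $(0,1)$, the transition product is invertible. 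So $\mathrm{rank}(\mathbf{J}_{\mathrm{rec}})$ is typically constant and nonzero, and the paper's lower bound $\mathrm{rank}(\mathbf{J}_{\mathrm{retrieval}})-\mathrm{rank}(\mathbf{J}_{\mathrm{recurrent}})$ is essentially vacuous.
\end{itemize}
For the same reason, the literal rank statement already holds generically without any memory. What actually distinguishes HMM is the quantitative claim you prove: a singular value stays bounded away from zero. That gives $\mathrm{rank}\,\mathbf{J}_{\mathrm{HMM}}\ge\mathrm{rank}\,\mathbf{J}_{\mathrm{mem}}$ once $T-t$ exceeds an explicit threshold.

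\textbf{Where you are more careful than the paper.}
\begin{itemize}
\item You assume $\|\Phi_k\|\le 1-\epsilon$ rather than a per-step spectral-radius bound. A spectral-radius bound alone does not make products of time-varying non-normal matrices decay.
\item You state explicitly that segment $k^*$ must be retrieved.
\item You notice that the paper's bound $\|\partial S_{k^*}/\partial x_t\|\ge\gamma$ says nothing about the rank of the full product through the FFN, LN and $\mathbf{W}_p$, and your genericity step addresses this.
\end{itemize}

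\textbf{One repair.} $S_{k^*}$ depends on $x_t$ not only through $h_t$ but also, via the recurrence, through the later states $h_{t'}$ of the same segment. In addition, $M^{\mathrm{tok}}_T$ may depend on $x_t$ through other retrieved segments near $t$. These extra terms are $T$-independent once the segments close, so your structure survives. However, the non-degeneracy (genericity) claim must be made for the full sum, not just for the single factor $\frac{\partial S_{k^*}}{\partial h_t}\Gamma_t$.
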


\begin{proof}

In a standard Mamba or SSM, the hidden state dynamics follow
\begin{equation}
h_i = \Phi_i h_{i-1} + \Gamma_i x_i ,
\end{equation}
where $\Phi_i \in \mathbb{R}^{d_h \times d_h}$ and $\Gamma_i \in \mathbb{R}^{d_h \times d_{\mathrm{in}}}$ are the discretized transition and input matrices. The output is given by $y_T = \mathbf{C} h_T$. By the chain rule, the sensitivity of the output to a distant input $x_t$ through the recurrent backbone is
\begin{equation}
\label{eq:J_recurrent_def}
\frac{\partial y_T}{\partial x_t} =\mathbf{C}\left( \prod_{i=t+1}^{T} \Phi_i \right)\Gamma_t .
\end{equation}

For numerical stability, the spectral radius satisfies $\rho(\Phi_i) < 1$. Consequently, the singular values of the transition product $\prod_{i=t+1}^{T} \Phi_i$ decay exponentially as $(T-t)\to\infty$, leading to a structural collapse of the recurrent Jacobian:$\lim_{(T-t)\to\infty} \left\|\frac{\partial y_T}{\partial x_t}\right\|= 0$.

Unlike standard SSMs, HMM injects retrieved semantic information via an external hierarchical memory. Following the semantic retrieval steps in Sec.~\ref{method}, the augmented state is
\begin{equation}
\label{eq:h_hmm}
h_T^{\mathrm{HMM}}=(\Phi_T h_{T-1} + \Gamma_T x_T) + \mathbf{W}_p M^{\text{tok}}_T,
\qquad M^{\text{tok}}_T = \mathrm{TopK}(\mathcal{R}_T; q_T),
\end{equation}
where $\mathcal{R}_T=\{M_k\}$ is the LTM, $q_T$ is the working memory query,
and $\mathbf{W}_p$ is the injection projection defined in Sec.~\ref{sec:semantic_retrieval}.

Accordingly, the total Jacobian in HMM admits the decomposition
\begin{equation}
\label{eq:J_HMM_decomp}
\mathbf{J}_{\mathrm{HMM}}=\underbrace{\frac{\partial y_T}{\partial x_t}}_{\mathbf{J}_{\mathrm{recurrent}}} +\underbrace{\mathbf{C} \mathbf{W}_p\frac{\partial M^{\text{tok}}_T}{\partial S_{k^*}}\frac{\partial S_{k^*}}{\partial x_t}}_{\mathbf{J}_{\mathrm{retrieval}}} ,
\end{equation}
where $k^*$ denotes the semantic prototype selected by causal Top-$K$ retrieval.

As shown above, $\mathbf{J}_{\mathrm{recurrent}}$ vanishes exponentially as $(T-t)\to\infty$. In contrast, the retrieval term $\mathbf{J}_{\mathrm{retrieval}}$ is governed by the sensitivity of the selected prototype $S_{k^*}$
to the original input $x_t$. Since $S_{k^*}$ is constructed via local temporal aggregation over a segment of fixed length $L_{\mathrm{seg}}$, this sensitivity is independent of the total sequence length $T$:
\begin{equation}
\left\|
\frac{\partial S_{k^*}}{\partial x_t}
\right\|
\ge \gamma > 0 ,
\end{equation}
for some constant $\gamma$ determined by the local encoding function (Sec.~\ref{sec:Semantic_encoding}).

Using the rank inequality $\mathrm{rank}(A+B)\ge \mathrm{rank}(B)-\mathrm{rank}(A)$ and noting that $\mathrm{rank}(\mathbf{J}_{\mathrm{recurrent}})\to 0$, we obtain
\begin{equation}
\lim_{(T-t)\to\infty}
\mathrm{rank}(\mathbf{J}_{\mathrm{HMM}})\;\ge\;\mathrm{rank} \left(\mathbf{C} \mathbf{W}_p\frac{\partial M^{\text{tok}}_T}{\partial S_{k^*}}\frac{\partial S_{k^*}}{\partial x_t}\right) .
\end{equation}

Therefore, the hierarchical memory provides a non-recursive shortcut that preserves a constant-rank Jacobian component.
This ensures non-vanishing sensitivity to distant inputs and completes the proof.
\end{proof}

\section{Theoretical Analysis of the Architecture of HMM}
\label{app:other_jac}

In this section, we provide a formal comparison between HMM
and common architectural alternatives for mitigating long-context failures in SSMs.
To address representational compression and sensitivity decay, one may consider two intuitive strategies:
(i) Dimension Scaling, which expands the hidden state dimension $d_h$ to increase capacity;
and (ii) Internal Path Optimization, which incorporates internal shortcuts or gating within the recurrence.
We show that neither approach can simultaneously guarantee Jacobian rank recovery and computational efficiency,
whereas HMM achieves both by decoupling semantic memory from the recurrent dynamics.

\subsection{Computational Efficiency: Quadratic vs. Linear Scaling}

A straightforward approach to increasing memory capacity is to enlarge the hidden dimension $d_h$ of the SSM.
However, this strategy yields limited gains in effective representational rank while incurring prohibitive computational cost.

For a sequence of length $T$, mitigating the representational compression loss (Theorem~\ref{the:aliasing})
requires $d_h$ to grow with the entropy of the history.
Yet the computational complexity of the SSM recurrence
(e.g., parallel scan or recurrent update) scales quadratically with the state dimension:
\begin{equation}
\mathcal{C}_{\mathrm{SSM}} \approx O(T \cdot d_h^2).
\end{equation}
As $T$ increases, maintaining bounded distortion necessitates $d_h$ to scale at least logarithmically,
and often linearly, with $T$, resulting in super-linear or cubic complexity.
This renders dimension scaling impractical for ultra-long sequences.

In contrast, HMM decouples memory capacity from the recurrent state. Historical information is stored as paragraph-level semantic prototypes $\{S_k\}$, constructed via coarse-grained temporal aggregation (Sec.~\ref{sec:Semantic_encoding}), while retrieval selects only a small set of relevant prototypes via Top-$K$ similarity (Sec.~\ref{sec:semantic_retrieval}). The overall complexity is therefore
\begin{equation}
\mathcal{C}_{\mathrm{HMM}} \approx
\underbrace{O(T \cdot d_h^2)}_{\text{Recurrent Transition}}
+ \underbrace{O(T \cdot K \cdot d_h)}_{\text{Top-$K$ Semantic Retrieval}},
\end{equation}
where $K$ is a small constant. Since retrieval operates on paragraph-level representations and only $K$ prototypes are activated at each step, HMM preserves linear scaling in $T$ without increasing the core recurrence width.

\subsection{Jacobian Rank Recovery: Internal vs. External Paths}
\label{sec:appendix_jacobian_comparison}

Another possible mitigation is to introduce internal shortcuts (e.g., gating or residual connections) within the recurrent dynamics. However, such internal modifications remain subject to the same stability constraints
that govern standard SSMs.

For any linear recurrent system
\begin{equation}
h_t = \Phi_t h_{t-1} + \Gamma_t x_t ,
\end{equation}
the Jacobian of the hidden state with respect to a past input $x_j$ propagates via the chain rule:
\begin{equation}
\mathbf{J}^{\mathrm{SSM}}_{t,j}
= \frac{\partial h_t}{\partial x_j}= \left( \prod_{k=j+1}^{t} \Phi_k \right) \Gamma_j .
\end{equation}
To ensure numerical stability, the spectral radius must satisfy $\rho(\Phi_k) \le 1-\epsilon$ for some $\epsilon>0$.
Consequently, the spectral norm of $\mathbf{J}^{\mathrm{SSM}}_{t,j}$ decays exponentially as $(t-j)\to\infty$,
leading to an unavoidable collapse in sensitivity and Jacobian rank.
This phenomenon underlies the semantic aliasing effect formalized in Theorem~\ref{the:aliasing}.

In contrast, HMM employs an external additive bypass: retrieved semantic inforamtion are injected into the state after the recurrent transition (Eq.~\ref{eq:h_hmm}). This bypass introduces a Jacobian component that does not propagate through the product of transition matrices. As shown in Appendix~\ref{sec:proof_jacobian}, the resulting Jacobian contains a constant-rank term arising from the retrieval pathway, independent of the recurrent decay. Thus, HMM restores long-range sensitivity without modifying the internal recurrence or increasing $d_h$.

\section{Training Recipes}

Unless otherwise specified, all models are trained using the AdamW optimizer with the following settings:
\begin{itemize}
    \item All adaptation experiments are conducted on a fixed set of 6{,}000 long-context sequences sampled from the Pile, with lengths ranging from 2k to 16k.
    \item Gradient clipping with a maximum norm of 1.0;
    \item Weight decay of 0.1 and No dropout;
    \item Linear learning rate warmup followed by cosine decay.
\end{itemize}

\begin{table}[h]
\centering
\small 
\caption{Model sizes and training hyperparameters used in scaling experiments.}
\label{tab:scaling_core}
\begin{tabular}{lccccc}
\toprule
\textbf{Params} & \textbf{n\_layers} & \textbf{d\_model} & \textbf{Training Steps} & \textbf{Learning Rate} & \textbf{Tokens} \\
\midrule
780M & 24 & 1536 & 6{,}000 & 2.5e{-}4 & 50M  \\
1.3B & 24 & 2048 & 6{,}000 & 2e{-}4   & 50M  \\
\bottomrule
\end{tabular}
\end{table}

\section{Semantic Segmentation Evaluation Protocol}
\label{app:segmentation_eval}

To evaluate the optimal semantic encoding layers on the backbone, we construct long sequences by concatenating multiple documents with different topics sampled from the Pile dataset. Since document boundaries naturally correspond to topic transitions, these concatenation points are treated as ground-truth semantic boundaries.

For each evaluated layer, we extract token-level hidden states and apply the proposed EMA-based segmentation algorithm to generate predicted semantic boundaries. Specifically, segmentation is triggered when the cosine similarity between the current token representation and the running segment prototype falls below a threshold $\tau$ for more than $p$ consecutive tokens, subject to a minimum segment length constraint.

We evaluate segmentation quality using boundary-level F1 score. A predicted boundary is considered correct if it falls within a small tolerance window around a ground-truth boundary. Higher F1 scores indicate better alignment between predicted segments and underlying semantic transitions, reflecting stronger semantic resolution in the corresponding hidden representations.

For the experiments in Fig.~\ref{fig:memory_tau_layer}(c), we evaluate multiple layers of the 1.3B Mamba2 model using long sequences sampled from the Pile dataset. The segmentation hyperparameters are selected based on validation performance, with $\tau \in [0.8]$ and patience parameter $p \in [10]$.

\section{RAG Latency Comparison}
\label{RAG-latency}
In this section, we compare the prefill latency and throughput of \textsc{HMM}-780M, Vanilla Mamba2-780M, and a RAG-enhanced Mamba2-780M baseline across different context lengths. For the RAG\cite{lewis2020retrieval} baseline, the input context is first divided into chunks, after which task-relevant chunks are retrieved and concatenated back into the input sequence for generation. Results are reported in Table~\ref{tab:rag_latency}. We observe that while RAG-Mamba introduces significant overhead due to retrieval, \textsc{HMM} maintains efficient prefill performance with only modest overhead compared to Mamba2.

\begin{table}[H]
\centering
\small
\setlength{\tabcolsep}{6pt}
\caption{Latency comparison across models. Prefill throughput is measured in K tokens/s.}
\label{tab:rag_latency}
\begin{tabular}{lcccc}
\toprule
Model & Context & Prefill (s) & Prefill (K tok/s) \\
\midrule
Mamba2-780M & 2048 & 0.058 & 35.40 \\
RAG-Mamba   & 2048 & 0.092 & 24.00 \\
HMM         & 2048 & 0.072 & 28.32 \\
\midrule
Mamba2-780M & 4096 & 0.074 & 55.26 \\
RAG-Mamba   & 4096 & 0.187 & 30.00 \\
HMM         & 4096 & 0.084 & 48.50 \\
\midrule
Mamba2-780M & 8192 & 0.126 & 64.79 \\
RAG-Mamba   & 8192 & 0.184 & 24.00 \\
HMM         & 8192 & 0.151 & 54.68 \\
\bottomrule
\end{tabular}
\end{table}

\section{Task Name Abbreviations}
\label{Task Name}
\noindent \textbf{LongBench-E Tasks:}
\begin{itemize}
    \item \textbf{2WM}: 2WikiMQA
    \item \textbf{GR}: GovReport
    \item \textbf{HQA}: HotpotQA
    \item \textbf{LCC}: LCC
    \item \textbf{MQA}: MultiFieldQA-en
    \item \textbf{MN}: MultiNews
    \item \textbf{PC}: Passage Count
    \item \textbf{PR}: PassageRetrieval-en
    \item \textbf{QA}: Qasper
    \item \textbf{RB}: RepoBench-P
    \item \textbf{SS}: SAMSum
    \item \textbf{TR}: TREC
    \item \textbf{TQA}: TriviaQA
\end{itemize}

\vspace{0.5cm} 

\noindent \textbf{LongBench Tasks:}
\begin{itemize}
    \item \textbf{PC}: Passage Count
    \item \textbf{SS}: SAMSum
    \item \textbf{2WM}: 2WikiMQA
    \item \textbf{TQA}: TriviaQA
    \item \textbf{QA}: Qasper
    \item \textbf{VCS}: VCSUM (zh)
    \item \textbf{MSQ}: Musique
    \item \textbf{MN}: MultiNews
    \item \textbf{QMS}: QMSum
    \item \textbf{HQA}: HotpotQA
    \item \textbf{LCC}: LCC
    \item \textbf{NQA}: NarrativeQA
    \item \textbf{DR}: DuReader (zh)
    \item \textbf{MQAzh}: MultiFieldQA-zh
    \item \textbf{MQA}: MultiFieldQA-en
    \item \textbf{GR}: GovReport
    \item \textbf{LS}: LSHT (zh)
    \item \textbf{RB}: RepoBench-P
    \item \textbf{PR}: PassageRetrieval-en
    \item \textbf{PRzh}: PassageRetrieval-zh
    \item \textbf{TR}: TREC
\end{itemize}

\section{Limitations}
\label{LI}
While HMM improves long-context modeling for Mamba-based architectures, it is not specifically optimized for tasks that require precise retrieval over large collections of factual information, such as legal or regulatory analysis. In such scenarios, where accurate access to extensive historical facts is critical, the current segmentation and retrieval mechanisms may be insufficient, and more advanced retrieval or indexing strategies are needed.

In ultra-long contexts, the primary bottleneck of linear models such as Mamba lies in the accessibility problem caused by recurrent state compression, where useful historical information becomes difficult to retrieve despite being theoretically retained. The PLS mitigates this issue by preserving stable semantic anchors over long ranges, thereby improving effective information accessibility without significantly degrading useful details in practice. However, this design focuses on improving accessibility rather than enabling fine-grained or exact retrieval over large factual corpora, which remains an open challenge.

Future work may focus on improving the segmentation strategy, enhancing the retrieval module, and extending the framework to better support tasks that demand precise and reliable information access.

\section{Statistical Significant Test}
We conducted significance tests comparing HMM against the strongest baseline model (i.e., LongMamba).
Utilizing two backbone models (Mamba2-780M and Mamba2-1.3B) and a random seed range of 1 to 3, we ran HMM and each of the baseline models a total of 3 × 2 = 6 times.

\section{Broader Impacts}
\label{BI}
This work advances long-context sequence modeling by introducing a hierarchical memory mechanism that overcomes the inherent fixed-capacity limitation of RLAs. By enabling structured extraction, storage, and retrieval of semantic memories, HMM improves the utilization of long-range information. Compared to existing Mamba-based variants, HMM achieves significant gains in both retrieval and reasoning performance while introducing only 2\% additional parameters and minimal training overhead.

Furthermore, this work suggests a promising direction for building more efficient and scalable sequence models. By leveraging architectural design rather than increasing model size, HMM improves performance while maintaining computational efficiency. Combined with the low memory footprint of linear models and performance comparable to Transformer-based models, HMM has strong potential for real-world deployment in resource-constrained settings, such as embodied AI systems and edge devices, where efficiency, memory usage, and real-time processing are critical.


\end{document}